\def\eqref#1{equation~\ref{#1}}
\def\1{\bm{1}}
\DeclareMathAlphabet{\mathsfit}{\encodingdefault}{\sfdefault}{m}{sl}
\SetMathAlphabet{\mathsfit}{bold}{\encodingdefault}{\sfdefault}{bx}{n}
\DeclareMathOperator*{\argmax}{arg\,max}
\DeclareMathOperator*{\argmin}{arg\,min}
\DeclarePairedDelimiter\abs{\lvert}{\rvert}%
\let\oldabs\abs
\def\abs{\@ifstar{\oldabs}{\oldabs*}}
\title{Policy-Guided Search on Tree-of-Thoughts for Efficient Problem Solving with Bounded Language Model Queries
}
\author{\name Sumedh Pendurkar\email sumedhpendurkar@tamu.edu \\
      \addr Department of Computer Science \& Engineering\\
      Texas A\&M University
      \AND
      \name Guni Sharon \email guni@tamu.edu \\
      \addr Department of Computer Science \& Engineering\\
      Texas A\&M University
      }
\definecolor{bleudefrance}{rgb}{0.19, 0.55, 0.91}
\newcommand{\bmax}{$b_{\mathrm{max}}$}
\newtheorem{theorem}{Theorem}
\newtheorem{corollary}{Corollary}
\newtheorem{proposition}{Proposition}
\newtheorem{definition}{Definition}
\begin{document}

\maketitle

\begin{abstract}
  Recent studies explored integrating state-space search algorithms with \textit{Language Models} (LM) to perform look-ahead on the token generation process, the ``Tree-of-Thoughts'' (ToT), generated by LMs, thereby improving performance on problem-solving tasks. However, the affiliated search algorithms often overlook the significant computational costs associated with LM inference, particularly in scenarios with constrained computational budgets. Consequently, we address the problem of improving LM performance on problem-solving tasks under limited computational budgets. We demonstrate how the probabilities assigned to thoughts by LMs can serve as a heuristic to guide search within the ToT framework, thereby reducing the number of thought evaluations. Building on this insight, we adapt a heuristic search algorithm, \textit{Levin Tree Search} (LTS), to the ToT framework, which leverages LMs as policies to guide the tree exploration efficiently. We extend the theoretical results of LTS by showing that, for ToT (a pruned tree), LTS guarantees a bound on the number of states expanded, and consequently, on the number of thoughts generated. Additionally, we analyze the sensitivity of this bound to the temperature values commonly used in the final softmax layer of the LM. Empirical evaluation under a fixed LM query budget demonstrates that LTS consistently achieves comparable or higher accuracy than baseline search algorithms within the ToT framework, across three domains (Blocksworld, PrOntoQA, Array Sorting) and four distinct LMs. These findings highlight the efficacy of LTS on ToT, particularly in enabling cost-effective and time-efficient problem-solving, making it well-suited for latency-critical and resource-constrained applications.
\end{abstract}

\section{Introduction}


\textit{Language Models (LMs)} have demonstrated promising performance across a wide range of natural language processing tasks~\citep{brown2020language,chowdhery2023palm,achiam2023gpt}. Although LMs are effective on a wide range of language understanding and generation tasks, complex problem-solving tasks such as reasoning tasks remain challenging and often require additional techniques to improve performance. Recent prompting strategies, such as \textit{Chain-of-Thought (CoT) prompting}~\citep{wei2022chain}, improve performance on such tasks by eliciting intermediate reasoning steps before producing a final response. Building on this idea, methods such as \textit{Tree-of-Thoughts (ToT)}~\citep{yao2023tree} (see Figure~\ref{fig:tot-figure}) and \textit{Reasoning-and-Planning (RAP)}~\citep{hao2023reasoning} generate structured reasoning trajectories in the form of search trees. These trees are formed by exploring multiple coherent intermediate sequences of tokens, or \textit{thoughts}, at each decision point. Furthermore, approaches like ToT and RAP employ \textit{state-space search} (referred to as ``search'') algorithms to navigate the generated search trees and produce a final response. For instance, ToT~\citep{yao2023tree} method proposed using \textit{depth first search (DFS)}~\citep{cormen2022introduction} or \textit{beam search (BS)}~\citep{sutskever2014sequence} while RAP~\citep{hao2023reasoning} used \textit{Monte-Carlo Tree Search (MCTS)}~\citep{kocsis2006bandit} as the search algorithm. These approaches have demonstrated significant improvements on complex reasoning tasks compared to standard LM decoding methods that generate responses by greedily selecting tokens without additional exploration.

\begin{wrapfigure}{r}{0.5\textwidth}

  \centering
  \includegraphics[width=0.5\textwidth]{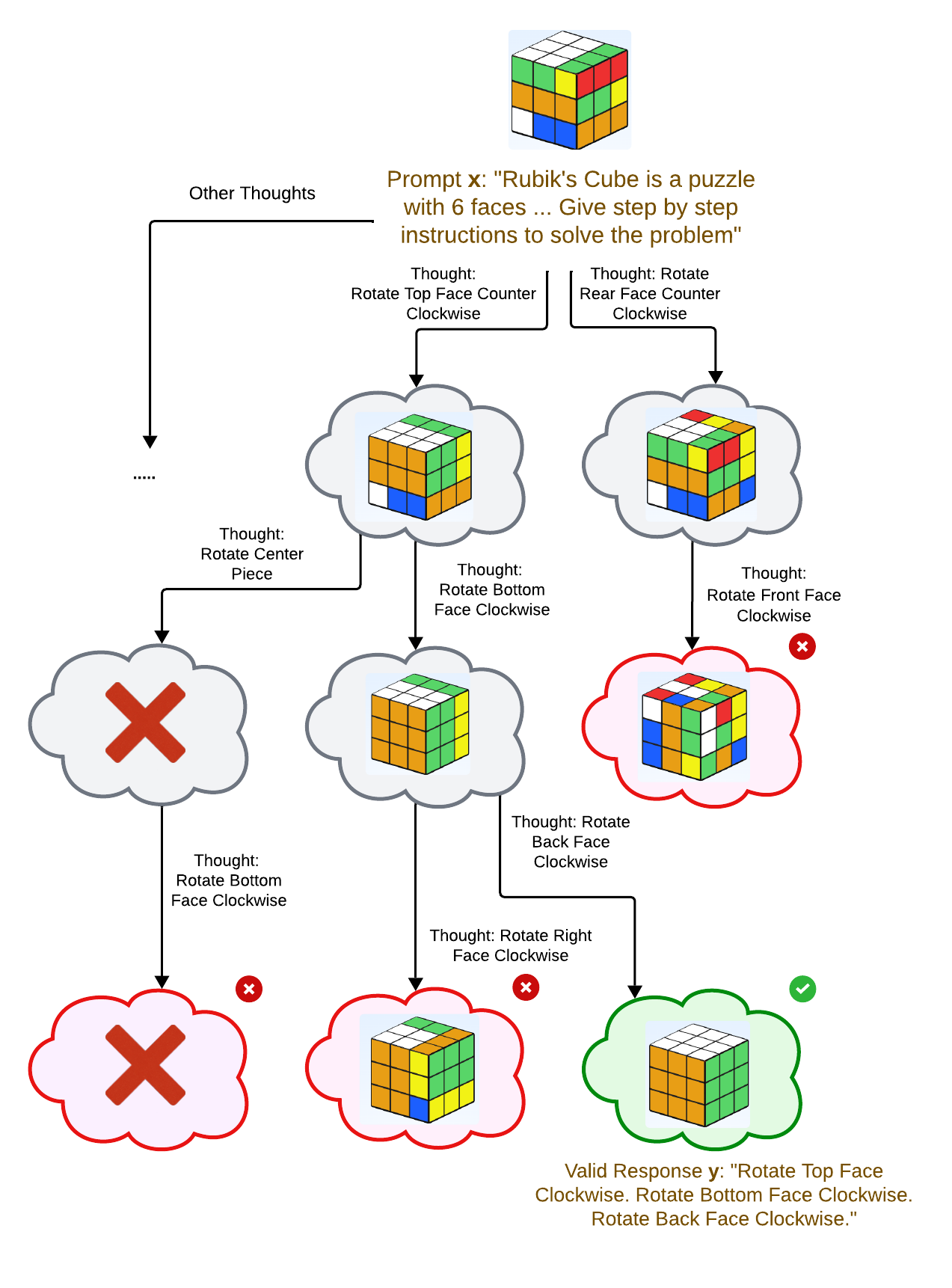}
\caption{Illustration of the Tree-of-Thoughts (ToT) framework applied to a Rubik's Cube domain. Cube states with red or green backgrounds represent terminal states as determined by the LM, while gray indicates intermediate states. The green background denotes a goal state. States marked with a cross represent invalid states (not known during search) resulting from thoughts that are infeasible for the domain.}
  \label{fig:tot-figure}
\end{wrapfigure}

While these approaches can be effective, the improvements come at the cost of increased inference-time computation. Generating and evaluating multiple reasoning paths requires repeated LM queries, which can be expensive, especially as the size of LMs increases, as highlighted in recent work~\citep{snell2024scaling,muennighoff2025s1}. To address this issue, we focus on improving LM performance under a strictly constrained computational budget, where compute is measured by the number of generated thoughts. Since each thought consists of a sequence of tokens, minimizing the number of thoughts also reduces the total number of LM queries, assuming the average length of each thought remains approximately constant. To this end, we explore adaptations of search algorithms within the ToT framework to elicit better LM responses when the number of allowable thought generations is limited.

We begin with an observation: explicitly evaluating intermediate thoughts using LMs, commonly performed to rank and pick thoughts to expand, requires additional LM queries. Given this understanding, we propose using the LM as a policy, mapping from a partial sequence of thoughts (state) to a distribution over next thoughts (neighboring states), thereby guiding the search without requiring additional evaluation queries. Building on this observation, we adapt \textit{Levin Tree Search (LTS)}~\citep{orseau2018single} to the ToT framework, which directly leverages LM as a policy to effectively guide the search. Further, we extend the theoretical guarantee on the number of state expansions, and consequently on the number of thoughts generated, from the original LTS formulation on generic search trees~\citep{orseau2018single} to the ToT framework. Note that the original LTS theoretical expansion bound does not directly apply to the ToT framework due to its pruned structure.
Moreover, we analyze the sensitivity of the bound on the number of thoughts generated, to the temperature parameter, commonly used in the final softmax layer of LMs. Experimental results on three representative domains (Blocksworld, PrOntoQA, Array Sorting) demonstrate that LTS consistently matches or exceeds the accuracy of guided DFS and beam search, search algorithms considered in the original ToT work~\citep{yao2023tree}, under tight computational budgets. Further, the results show that LTS is competitive or better than guided DFS across three domains and four different LMs. \footnote{See \url{https://github.com/sumedhpendurkar/Search-LLM-inference} for our code.}
\section{Preliminaries}

\subsection{Problem Formulation}
\label{sec:formulation}


Let $\mathcal{X}$ denote a discrete space of all possible \textit{tokens}, where a token refers to a basic unit of text such as a word, subword, or punctuation mark. The definition of a token is specific to the tokenizer of a given language model (see for example~\citep{grattafiori2024llama}). A single token is denoted with a non-bold letter, e.g. $x\in\mathcal{X}$, while a sequence of tokens is denoted with a bold letter, e.g., $\textbf{x} = (x_1, x_2, \dots, x_n)$.  A \textit{language model (LM)}, $p$, defines a probability distribution over a token given a sequence of previous tokens. Formally, $p: \mathcal{X}^n \mapsto \Delta(\mathcal{X})$, where $\Delta(\mathcal{X})$ is the probability simplex over $\mathcal{X}$ and $n$ is the maximum context length.\footnote{If the input to a given LM is smaller than $n$ the input is padded.} 
Given a language model $p$ and an input sequence $\textbf{x} = (x_1, x_2, \dots, x_n)$, an \textit{autoregressive decoding method}, $ALG$,  generates a response $\textbf{y} = ALG(p, \textbf{x})$.
For example, this can be done in a greedy way by selecting $p(y_i | \textbf{x}) = \argmax p(y_i| y_1, y_2 \dots y_{i-1}, \textbf{x})$. In this paper, we assume an access to an evaluator that determines whether a response is valid or not (see Section~\ref{sec:expt} for details). In our experiments, we consider a fixed sized test set $D=\{\textbf{x}_1,\textbf{x}_2,\dots,\textbf{x}_n\}$. 
The decoding methods solve problem-solving tasks, represented by an input sequence $\textbf{x}_i$, by generating  responses $\textbf{y}_i$ such that $\textbf{y}_i$ can be considered ``valid'' by the domain-specific evaluator. 





We consider the \textit{Tree-of-Thoughts (ToT)}~\citep{yao2023tree} framework for decoding by generating a tree over \textit{thoughts}.  Following~\citep{yao2023tree}, a sequence of tokens $(t_1, t_2, \dots, t_n)$ is said to be a thought $\textbf{t}$ if it is a coherent language sequence. The probability of a thought being selected is given by $p(\textbf{t}|\textbf{x}) = \prod_{i=1}^{i=n} p(t_i| t_1, t_2 \dots t_{i-1}, x)$. A sequence of thoughts generated in an autoregressive manner, with the final thought being the end-of-sequence token, constitutes a response $\textbf{y}$. The original ToT framework suggests two ways of generating next thoughts, (1) sampling -- by sampling from the distribution $\textbf{t}_i \sim p(\textbf{t}_i| \textbf{t}_1, \textbf{t}_2, .., \textbf{t}_{i-1},\textbf{x})$, (2) propose-prompts -- using specially designed prompts to generate multiple next thoughts in one go, for example, sequentially listing all actions by appending to output as a part of the LM response. 
Initially, we consider an alternate strategy, where we consider all possible thoughts that could be generated given $\{\textbf{t}_1, \textbf{t}_2, .., \textbf{t}_{i-1},\textbf{x}\}$. This generates a \textit{tree, $\mathcal{T}$}, where each node, referred to as a \textit{state} $s \in S$, is a function of the input query $x$ and the thoughts considered up to that point. The thoughts, $\textbf{t}$, are edges in $\mathcal{T}$. The neighbor states of $s$ are obtained by appending LM generated thoughts to $s.\textit{thoughts}$, where $s.\textit{thoughts}$ denotes the sequence of thoughts generated up to state $s$. This process of generating the neighbors (thoughts) is referred to as \textit{expansion}. Note, in $\mathcal{T}$, each state commonly has very high out-degree (referred as \textit{branching factor}). The sampling and propose-prompts based thought generation methods could reduce the branching factor and consequently the search space. We discuss the impact of such methods in Section~\ref{sec:lts-main}. The descendants of a state in a tree are all states reachable by traversing one or more edges originating from that state.
We denote the start state (root node) as $s_0$ where $s_0.thoughts=\phi$. The depth of a state in a tree $\mathcal{T}$ is given by $g(s)$ and we denote depth of the start state as 0, that is, $g(s_0)=0$. Let $H \subseteq S$ be set of terminal states where the final thought generated consists solely of the end-of-sequence token. A subset of the terminal states $Z \subseteq H$ are valid goal states where $\forall z \in Z$ the final response, $z.thoughts$, is considered valid by the evaluator. The goal of the decoding method is to generate a final response $\textbf{y}$ by finding a start-to-goal path in tree $\mathcal{T}$. See Figure~\ref{fig:tot-figure} for an example ToT. 

Note, previous methods~\citep{yao2023tree} have considered generating multiple start-to-goal paths, and selecting one path (response) by using heuristic driven (LM based) scoring of responses. As we concerned with limited-budget scenarios we only consider the case where the search algorithms return the first solution found as additional calls to LMs are expensive.

\subsection{Background}

Our proposed methods integrates and extends prior work from the LM and heuristic search literature. As such, we discuss the relevant methods. 

\textbf{Prompt-Based and Search-Based Problem Solving:}
Chain-of-Thought (CoT) prompting and its variants~\citep{wei2022chain,kojima2022large} have shown that complex problem solving tasks such as reasoning tasks can be more effectively solved by decomposing the solution process into a sequence of intermediate thoughts. Since CoT selects the next thoughts greedily, it can get stuck in locally optimal paths. Self-Consistency~\citep{wangself} mitigates this by sampling multiple paths and selecting the final answer through majority voting. Least-to-Most prompting~\citep{zhouleast} decomposes the original question into simpler sub-questions, which are then answered sequentially to build up to the final solution. Tree-of-Thoughts (ToT)~\citep{yao2023tree} generalizes CoT by treating problem-solving as a tree search over thoughts, allowing backtracking and exploration of multiple paths over the tree using search algorithms, namely, \textit{beam search (BS)} and guided \textit{Depth First Search (DFS)}. Guided Depth-First Search (referred to as DFS henceforth) is a variant of the standard DFS algorithm in which states are evaluated, and the top-ranked state, according to the evaluation function, is selected for expansion. Beam Search is a search algorithm that maintains a fixed number of top-ranked states at each depth, and expands them in parallel. Both DFS and Beam Search use LM to evaluate the states. Reasoning-and-Planning (RAP)~\citep{hao2023reasoning} further extends this line of work by using LMs as reward functions to evaluate partial reasoning chains, for Monte-Carlo Tree Search (MCTS)~\citep{kocsis2006bandit}, enabling better performance. Our paper tackles problem-solving tasks similar to those addressed by the Tree-of-Thoughts (ToT) framework, but under a highly limited computational budget. Note, ~\citep{zhuangtoolchain} also address the problem of efficient tree space navigation by leveraging the A* search algorithm~\citep{hart1968formal}. However, their method assumes access to well-defined cost functions for evaluating states in the search tree. These cost functions are either task-specific or derived from heuristics built on demonstration memory and the LM itself. Thus, they could be hard to obtain for general LM problem-solving tasks. In contrast, our proposed approach relaxes this requirement by operating directly over the probabilistic structure induced by the LM (ToT), avoiding the need for handcrafted or retrieved cost functions.

\textbf{LMs for Planning Problems:} Given the recent advancements in LMs~\citep{grattafiori2024llama,achiam2023gpt} and the integration of ML with planning problems~\citep{orseau2018single,agostinelli2019solving,pendurkar2024curriculum}, recent work has examined integrating LMs and planning solvers. Examples of such integration include~\citep{singh2023progprompt,ding2023task,liu2023llm+,katz2024thought}. For instance,~\citep{liu2023llm+} translates natural language instructions into symbolic planning languages like Planning Domain Description Language (PDDL)~\citep{mcdermott20001998} and uses classical planning algorithms. Further, recent work~\citep{koh2024tree,zhouwebarena}, has explored using LMs to navigate the internet (web automation). However, such methods restrict the environment by enforcing a structure on LM-generated actions (or thoughts)~\citep{liu2023llm+} or by using a fixed set of actions~\citep{koh2024tree} thereby reducing their applicability to domains such as mathematics and logical reasoning. In contrast, our method, built upon the ToT framework, imposes no such constraints, making it broadly applicable to domains where such representations may be infeasible.

\textbf{Learning and Search-Based Problem Solving with LMs}: Another line of work also supports the general idea of scaling test-time compute but relies on additional learning~\citep{muennighoff2025s1,li2025policy}. For instance, the Policy-Guided Tree Search (PGTS) approach~\citep{li2025policy} investigates using a learned policy to navigate the reasoning search space. However, their method requires additional training of the policy, which can be challenging in settings where supervised signals or reward functions are sparse or expensive to obtain. In contrast, our approach avoids such additional training overhead and remains applicable even in low-resource or inference only scenarios.

\subsubsection{Levin Tree Search (LTS)}
\label{sec:prelim-lts}

\textit{Levin Tree Search (LTS)}~\citep{orseau2018single} is a policy-guided search algorithm used to solve state-space search problems. LTS uses a policy $\pi: S \mapsto \Delta(S)$, that maps a state $s\in S$ to a distribution over next possible states, as a heuristic to effectively navigate the search space. Note, this mapping can be viewed as a distribution over the \textit{actions} (thoughts) following the original formulation~\citep{orseau2018single}. LTS follows a breadth-first search or A$^*$~\citep{hart1968formal} like structure where the priority of each state is given by $\underset{s}\min\ cost(s) $. Here, $cost(s_i) = \frac{g(s_i)}{\pi(s_i)}$ where $\pi(s_i)$ is defined as the probability of being at state $s_i$ where $\pi(s_i) = \prod_{j=1}^{i} \pi(s_j \mid s_0, s_1, \dots, s_{j-1})$, where $\{s_0, s_1, \dots, s_i\}$ is the path from $s_0$ to $s_i$ in the tree. In the context of state-space search literature, a search algorithm is said to expand in best-first order with respect to a cost function $cost(s)$ if, for all states $s_1$ and $s_2$, whenever $cost(s_1) < cost(s_2)$, then $s_1$ is expanded before $s_2$. 


The following results from~\citep[Theorem 2, Theorem 3]{orseau2018single} hold for any tree.\footnote{While the Theorem~\ref{thm:lts-bound} holds more generally for graphs with re-expansions, we state it here in the context of trees, which aligns with ToT framework.}

\begin{theorem}
\label{thm:lts-best-first}
LTS expands states in best-first order.
\end{theorem}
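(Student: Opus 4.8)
The plan is to show that LTS, being an A\textsuperscript{*}-style best-first search driven by the priority function $cost(s) = g(s)/\pi(s)$, processes states in nondecreasing order of $cost$, which is precisely the best-first property as stated. The key structural fact I would exploit is that $cost$ is monotone along any root-to-state path in the tree: if $s'$ is a child of $s$, then $g(s') = g(s) + 1 > g(s)$ and $\pi(s') = \pi(s)\cdot\pi(s' \mid s_0,\dots,s) \le \pi(s)$, so $cost(s') = g(s')/\pi(s') \ge (g(s)+1)/\pi(s) > g(s)/\pi(s) = cost(s)$. Hence every state has strictly larger cost than its parent, and more generally than every proper ancestor.

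First I would set up the invariant maintained by the priority-queue (open list) discipline: at every step LTS removes from the open list a state $s$ of minimum $cost(s)$, marks it expanded, and inserts its children. I would argue by induction on the sequence of expansions that the costs of successively expanded states are nondecreasing. Suppose for contradiction that at some point a state $s_2$ is expanded before a state $s_1$ with $cost(s_1) < cost(s_2)$. Consider the moment just before $s_2$ is expanded. I would show that at that moment some ancestor $a$ of $s_1$ (possibly $s_1$ itself) is present in the open list: this holds because in a tree the unique path from $s_0$ to $s_1$ must have been "discovered" only up to some frontier node $a$ that has been generated but not yet expanded — if every ancestor of $s_1$ had already been expanded, then $s_1$ itself would have been generated and placed on the open list, so we can take $a = s_1$. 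By the monotonicity of $cost$ along paths, $cost(a) \le cost(s_1) < cost(s_2)$. But then LTS would have selected $a$ (or some even cheaper open state) rather than $s_2$, contradicting the minimum-cost selection rule. Therefore the expansion order is nondecreasing in $cost$, which gives exactly the claim: whenever $cost(s_1) < cost(s_2)$, $s_1$ is expanded before $s_2$.

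The main obstacle — and the step that needs care rather than cleverness — is the "frontier" argument: establishing that whenever $s_1$ has not yet been expanded, there is an already-generated, not-yet-expanded ancestor of $s_1$ on the open list whose cost lower-bounds $cost(s_1)$. This relies on the tree structure (a unique path to each node, so no alternative cheaper route can bypass the frontier) and on the initialization that $s_0$ is placed on the open list at the start. I would also need to handle the tie-breaking convention when $cost(s_1) = cost(s_2)$: the theorem as stated only constrains the strict case, so ties may be broken arbitrarily and no extra argument is required there. Finally, I would remark that since this is exactly the argument of \citet[Theorem 2]{orseau2018single} specialized to trees, and the ToT tree $\mathcal{T}$ is a tree, the result transfers verbatim; the only ToT-specific subtlety (pruning, invalid states) does not affect the ordering argument because pruned branches simply never enter the open list.
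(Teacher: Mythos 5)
Your argument is correct and is essentially the standard one: the paper itself does not prove this theorem but imports it verbatim from \citet[Theorem 2]{orseau2018single}, and your reconstruction — strict monotonicity of $cost(s)=g(s)/\pi(s)$ along root-to-node paths (since $g$ increases and $\pi$ is non-increasing), combined with the frontier invariant that any unexpanded node has an ancestor on the open list whose cost lower-bounds its own — is exactly the argument used there. Your handling of ties and of the tree-specific simplification (unique paths, no re-expansions) matches the paper's framing as well.
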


\begin{theorem}
\label{thm:lts-bound}
Given a tree $\mathcal{T}$ and terminal states $H$, LTS with a policy ${\pi}$ ensures that the number of state expansions $N$ before reaching any of the terminal states is bounded by 
$$
N (\mathcal{T}, H) \le \min_{s \in H} \frac{g(s)}{{\pi}(s)}
$$
\end{theorem}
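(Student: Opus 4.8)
The plan is to leverage Theorem~\ref{thm:lts-best-first}, which establishes that LTS expands states in best-first order with respect to $cost(s) = g(s)/\pi(s)$, and combine it with a counting argument over the probabilities. First I would fix any terminal state $s^\star \in H$ and observe that, by the best-first expansion property, every state $s$ expanded strictly before $s^\star$ must satisfy $cost(s) \le cost(s^\star)$, i.e. $g(s)/\pi(s) \le g(s^\star)/\pi(s^\star)$, which rearranges to $\pi(s) \ge \pi(s^\star)\, g(s)/g(s^\star)$. Since LTS halts upon reaching a terminal state, it suffices to bound the number of such strictly-earlier-expanded states (plus one for $s^\star$ itself), and then take the minimum over $s^\star \in H$ at the end.

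The key step is to control the total probability mass of the expanded set. Let $E$ denote the set of states expanded before termination; I would argue that $\sum_{s \in E} \pi(s) \le \max_{s \in E} g(s)$, using the tree structure: for each fixed depth $d$, the states at depth $d$ form an antichain whose $\pi$-values sum to at most $1$ (because $\pi(s) = \prod_j \pi(s_j \mid s_0,\dots,s_{j-1})$ is exactly the probability of the random walk induced by $\pi$ reaching $s$, and these events are mutually exclusive across a single depth level in a tree). Summing over the at most $\max_{s\in E} g(s)$ relevant depths — noting depth $0$ contributes only the root with $\pi(s_0)=1$ but $g(s_0)=0$, so one must be slightly careful and count depths $1$ through $g_{\max}$ — gives $\sum_{s\in E}\pi(s) \le g_{\max} := \max_{s \in E} g(s)$. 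Combining this with the lower bound $\pi(s) \ge \pi(s^\star) g(s)/g(s^\star)$ on each expanded state is not immediately clean, so instead I would use the simpler route: each expanded $s$ has $\pi(s) \ge \pi(s^\star)/g(s^\star) \cdot g(s) \ge \pi(s^\star)/g(s^\star)$ only when $g(s) \ge 1$; more directly, since $cost$ is monotone nondecreasing along any root-to-node path in the sense that suffices, one shows every expanded state $s \ne s_0$ has $\pi(s) \ge \pi(s^\star) \cdot 1/g(s^\star) \cdot g(s)$, hence $|E| \le 1 + \sum_{s \in E, s\neq s_0} 1 \le 1 + \frac{g(s^\star)}{\pi(s^\star)} \cdot \frac{1}{g_{\max}}\sum_{s} \pi(s)/g(s)\cdot g(s)$. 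The cleanest formulation, which I would adopt, is: $|E| \le \frac{g(s^\star)}{\pi(s^\star)} \sum_{s \in E} \frac{\pi(s)}{g(s)}$ after showing $\frac{\pi(s^\star)}{g(s^\star)} \le \frac{\pi(s)}{g(s)}$ for all $s$ expanded before $s^\star$ (a direct rearrangement of best-first order), and then bound $\sum_{s\in E}\pi(s)/g(s)$ by noting that at each depth $d \ge 1$ the depth-$d$ contribution is at most $(1/d)\sum_{s \text{ at depth } d}\pi(s) \le 1/d$, which sums to something larger than desired — so the truly clean bound requires the weighting to telescope.

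The main obstacle, and the place I would spend the most care, is making the depth-by-depth summation tight enough to produce exactly $g(s^\star)/\pi(s^\star)$ rather than a bound with an extra logarithmic or linear-in-depth factor. The resolution, following Orseau et al., is to not sum $\pi(s)/g(s)$ but rather to use the inequality $N \le \sum_{s \in E} \frac{g(s^\star)}{\pi(s^\star)} \cdot \frac{\pi(s)}{g(s^\star)}$? — no; the correct device is: order the expanded states and use that for the $k$-th expanded state $s_{(k)}$ one has $k \le cost(s_{(k)})\cdot(\text{something})$. Concretely, because all of $s_{(1)},\dots,s_{(k)}$ have $cost \le cost(s^\star) = g(s^\star)/\pi(s^\star)$, and because $\sum_{j\le k}\pi(s_{(j)})$ can be related to the maximum depth among them, one gets $k \le \max_j g(s_{(j)}) \cdot \max_j \frac{1}{\pi(s_{(j)})} \le g(s^\star)/\pi(s^\star)$ only after arguing $\pi(s_{(j)}) \ge \pi(s^\star) g(s_{(j)})/g(s^\star)$ and handling the sum. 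I would therefore structure the proof as: (1) invoke best-first order to get $g(s)/\pi(s) \le g(s^\star)/\pi(s^\star)$ for all expanded $s$; (2) prove the tree-level mass bound $\sum_{s \text{ at depth } d}\pi(s)\le 1$; (3) combine via $N = |E| = \sum_{s\in E} 1 \le \sum_{s \in E}\frac{g(s^\star)/\pi(s^\star)}{g(s)}\pi(s) = \frac{g(s^\star)}{\pi(s^\star)}\sum_{s\in E}\frac{\pi(s)}{g(s)} \le \frac{g(s^\star)}{\pi(s^\star)}\sum_{d=1}^{g_{\max}}\frac{1}{d}$; and then (4) recognize that the intended statement in the paper — and in Orseau et al. — actually uses the sharper accounting where the sum telescopes to at most $1$ rather than a harmonic number, which holds because LTS's queue processes states in a way that the total $\pi$-mass of expanded states is bounded by $1$ when weighted correctly by depth; I would cite Theorem~3 of \citet{orseau2018single} for the generic-tree version and then note that the ToT case follows identically since pruning only removes states (shrinking $E$ and the minimum), so the bound $N(\mathcal{T},H)\le \min_{s\in H} g(s)/\pi(s)$ is inherited verbatim.
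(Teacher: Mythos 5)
Your derivation does not close, and it fails at exactly the step you flag yourself: bounding $\sum_{s\in E}\pi(s)/g(s)$ depth-by-depth gives $\sum_{d=1}^{g_{\max}}1/d$, a harmonic number, so your route only yields $N \le \frac{g(s^\star)}{\pi(s^\star)}\sum_{d=1}^{g_{\max}}\frac{1}{d}$, which is weaker than the claimed bound by a logarithmic-in-depth factor. The ingredients you assemble are correct --- best-first order (Theorem~\ref{thm:lts-best-first}) gives $g(s)\le c\,\pi(s)$ with $c=g(s^\star)/\pi(s^\star)$ for every state expanded before $s^\star$, and the states at any fixed depth of a tree form an antichain with total $\pi$-mass at most $1$ --- but the missing idea is \emph{which} antichain to charge the count to. The argument the paper uses (in its proof of Proposition~\ref{thm:our-lts}, which is the same argument as Theorem~3 of \citet{orseau2018single}) counts the expanded states via the \emph{leaves} of the partial search tree $T_c$ of expanded nodes: every expanded node lies on a root-to-leaf path of $T_c$, so $N \le \sum_{n\in\mathcal{L}(T_c)} g(n)$. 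Applying $g(n)\le c\,\pi(n)$ to each leaf (leaves of $T_c$ are themselves expanded, hence satisfy the cost bound) and then using that $\mathcal{L}(T_c)$ is a \emph{single} antichain, so $\sum_{n\in\mathcal{L}(T_c)}\pi(n)\le 1$, gives $N\le c$ in one line. Summing probability mass over one antichain is what eliminates the extra factor; your version sums mass over every depth level separately, and each level can carry mass up to $1$, which is precisely where the harmonic number comes from.

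Your final move --- cite Theorem~3 of \citet{orseau2018single} and note that pruning only removes states --- is, as it happens, all the paper does for this particular statement (it is presented as an imported result), and the observation that passing to a subtree only shrinks the expanded set and the set of candidate terminal states is the right intuition for why the bound transfers to ToT. But as a standalone proof your write-up is circular: you cannot both attempt the derivation and fall back on citing the theorem when the derivation stalls. To make it self-contained, replace your step (3) with the leaf-depth counting above.
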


\subsubsection{Problems with Depth First Search on ToT}
\label{sec:problems-tot}

In this section, we present a case (with an illustrative example) that suggests DFS, although efficient in its use of LM queries, may not always be the most effective search algorithm for problem-solving tasks. 
Consider the simple instance depicted in \Cref{fig:exploration} with one goal state (and no additional terminal states) highlighted in green. We restrict our attention to a binary tree, where each state has exactly two successors. Here we assume that the probability values are used as rewards for DFS to select states for expansion. Since DFS follows a greedy strategy, it commits to one (the left, without loss of generality) subtree first and ends up exploring all the states in that subtree. Only after failing to find the goal state does it backtrack and explore the right subtree, where it eventually finds the goal. Whereas for LTS, the successors of the root's left successor each have a $cost$ of 3/(0.51*0.5) $\approx$ 11.76, which is higher than right successor of root which has a $cost$ of $\approx$ 6.12 and the goal state with $cost \approx$ 8.16. As a result, LTS switches to exploring the right subtree, finding the goal state without ever exploring the remainder of the left subtree. Additionally, if we consider the case when all leaf states are terminal states, DFS will return the left most leaf state as the encountered terminal state, which is not a goal state, thereby returning the incorrect solution. Whereas for LTS, the behavior remains unchanged from the previous case, and it will find the goal state (though potentially requiring more exploration than DFS). This case is reflected in our experimental findings (Section~\ref{sec:expt}), where DFS might reach a terminal state more quickly than LTS but often fails to find the goal state, resulting in lower overall accuracy. We refer to this as the \textit{initial-commitment} problem of DFS.

\begin{figure}[t]
    \centering
    \begin{subfigure}[t]{0.45\textwidth}
        \centering
     \includegraphics[width=\linewidth]{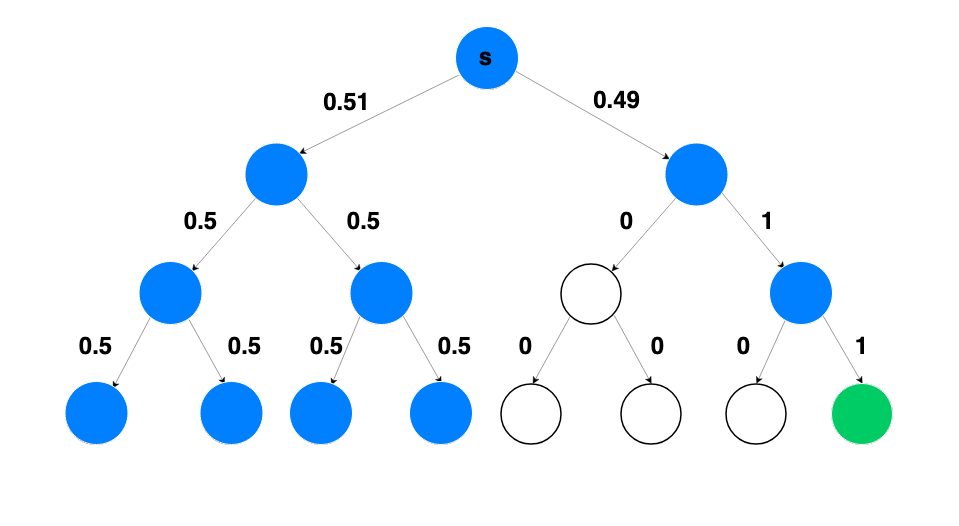}
        \caption{DFS}
        \label{fig:dfs}
    \end{subfigure}
    \hfill
    \begin{subfigure}[t]{0.45\textwidth}
        \centering
\includegraphics[width=\linewidth]{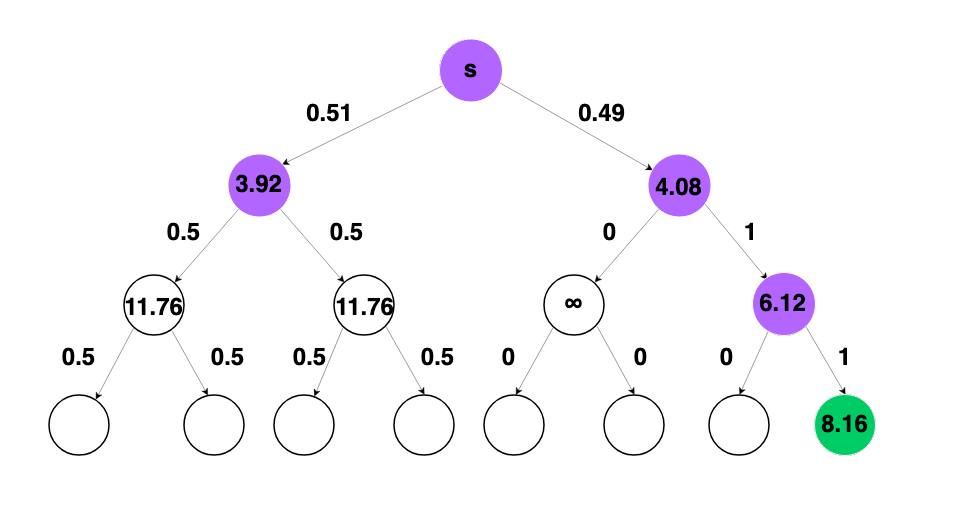}
        \caption{LTS}
        \label{fig:lts}
    \end{subfigure}
    
    \caption{Comparison of states expanded by DFS and LTS. State $s$ denotes the starting state and the green state indicates a goal state. The values on edges denote the probability of the next state being selected given the previous state. DFS uses these probabilities as rewards to guide the search. The value inside each state for LTS represents $cost = g(s)/\pi(s)$. The blue and purple states represent the states expanded by DFS and LTS, respectively.}
    \label{fig:exploration}
\end{figure}

\section{Levin Tree Search with ToT}
\label{sec:lts-main}
Guided DFS has two main limitations (1) It requires additional LM queries to determine which state to expand further, and (2) the problem of initial-commitment as discussed in Section~\ref{sec:problems-tot}. To overcome these limitations, we extend LTS to the ToT framework. We use the LM $p$ as a policy that provides a distribution over possible next thoughts, conditioned on the previously generated token sequence, as described in Section~\ref{sec:formulation}. In order to adapt to ToT, we propose the following changes. First, given the high vocabulary size of LMs (\# tokens) i.e., high branching factor,  we use the sampling technique to generate next thoughts $\textbf{t}_i \sim p(\textbf{t}_i| \textbf{t}_1, \textbf{t}_2, .., \textbf{t}_{i-1},\textbf{x})$, as discussed by~\citet{yao2023tree}. We sample a total of \bmax\ thoughts per expansion. Note that, since we use the LM to guide the search, propose-prompt-based thought generation methods are not applicable to LTS. This is because such methods generate thoughts sequentially (and not independently). As a result, the probability values assigned by the LM during generation are conditioned on previously generated thoughts and, therefore, cannot be directly interpreted as independent heuristic scores. Second, following~\citep{hao2023reasoning}, we remove duplicate thoughts generated by sampling. We denote such a subtree of $\mathcal{T}$ as $T$. Lastly, since LTS operates over a tree (rather than a graph), including state-cuts~\citep{orseau2018single} is not meaningful due to the absence of cycles and are, therefore, excluded. The adapted LTS algorithm is presented in Algorithm~\ref{alg:lts-tot}.
\begin{algorithm}
\caption{Levin Tree Search}
\label{alg:lts-tot}
\begin{algorithmic}[1]
\State \textbf{Input:} Query budget $b$, start state $s_0$, LM $p$
\State \textbf{Output:} start-to-terminal state path $s_0$ to $h \in H$ \Comment{ (sequence of tokens/LM response)}
\State Initialize $\mathcal{F} \gets \{s_0\}$
\While{$\mathcal{F}\ne\phi$ \textbf{and} LM queries $\le$ b}
    \State $s \gets \argmin_{s \in \mathcal{F}} \frac{g(s)}{\pi(s)}$
    \State remove $s$ from $\mathcal{F}$
    \If{$s$ is terminal state}
    \State \textbf{return} path $s_0$ to  $s$
    \EndIf
    \State $A \gets p(\cdot|s).sample()$ \Comment{Generate Thoughts}
    \State Generate neighbor states $C$ by applying $a\in A$ to $s$
    \State $\mathcal{F} \gets \mathcal{F} \cup C$
\EndWhile
\end{algorithmic}
\end{algorithm}

Sampling-based methods may expand states in $\mathcal{T}$ in an order that does not follow best-first order. This follows as sampling can potentially prune the best states (with respect to the $cost$ function), and thus the state and its descendants might not be expanded. Therefore, Theorem~\ref{thm:lts-best-first} and thus Theorem~\ref{thm:lts-bound} do not hold (as originally stated) for the ToT framework. Nevertheless, by following a similar line of reasoning, we can derive an analogous bound in which the minimum is taken over the set of terminal states in the sampled subtree $T$, rather than over $H$. 

\begin{proposition}
\label{thm:our-lts}
Given a tree $\mathcal{T}$, $H$ as the terminal states of $\mathcal{T}$, a subtree $T$ generated by sampling,  and $H'\subseteq H$ as terminal states in $T$, LTS with a policy ${\pi}$ ensures that the number of state expansions $N$ before reaching any of the terminal states in $H'$ is bounded by 
$$
N (T, H') \le \min_{s \in H'} \frac{g(s)}{{\pi}(s)}
$$
\end{proposition}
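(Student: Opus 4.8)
The plan is to essentially replay the argument of Theorem~\ref{thm:lts-bound} verbatim, but with the search tree $\mathcal{T}$ replaced by the sampled subtree $T$ and the terminal set $H$ replaced by $H' \subseteq H$. The key observation that makes this work is that, once we condition on the realized sampling outcomes, $T$ is itself a well-defined (static) tree, and the LTS algorithm as stated in Algorithm~\ref{alg:lts-tot} runs on $T$ exactly as the original LTS would run on any tree: at each iteration it pops the frontier state $s$ minimizing $g(s)/\pi(s)$ and expands it, the only difference being that the children it generates are the (de-duplicated) sampled thoughts, i.e.\ precisely the children of $s$ in $T$. So from LTS's point of view it is executing on the tree $T$, and Theorem~\ref{thm:lts-best-first} applies to it with $T$ in place of $\mathcal{T}$: LTS expands states of $T$ in best-first order with respect to $cost(s) = g(s)/\pi(s)$.

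The main steps I would carry out are: (1) Fix an arbitrary realization of the sampling, so that $T$ and $H'$ are determined, and note $T$ is a tree with root $s_0$. (2) Invoke Theorem~\ref{thm:lts-best-first}, restated for $T$, to conclude LTS expands states of $T$ in best-first order. (3) Let $s^\star \in H'$ be a terminal state in $T$ achieving (or, if one prefers to avoid existence subtleties, any terminal state in $T$). Since LTS terminates as soon as it pops a terminal state from the frontier (lines 7--8 of Algorithm~\ref{alg:lts-tot}), and since $s^\star$ is placed on the frontier once all its proper ancestors in $T$ have been expanded, the algorithm must halt no later than the moment it would expand $s^\star$. (4) By best-first order, every state $s$ that LTS expands before terminating satisfies $cost(s) \le cost(s^\star) = g(s^\star)/\pi(s^\star)$ (with the usual tie-breaking convention that lets $s^\star$ be popped as soon as it is minimal). (5) Bound the number of such states: this is exactly the counting argument in the proof of Theorem~\ref{thm:lts-bound} in \citet{orseau2018single}, which shows that the number of states $s$ in a tree with $cost(s) \le c$ is at most $c$; applying it with $c = g(s^\star)/\pi(s^\star)$ gives $N(T, H') \le g(s^\star)/\pi(s^\star)$. (6) Since $s^\star \in H'$ was arbitrary, take the minimum over $H'$ to obtain the claimed bound. (7) Finally observe that the bound holds for every realization of the sampling with the same closed form, so it holds unconditionally.

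The step I expect to require the most care is making precise the claim that ``LTS running inside ToT is the same as LTS running on the static tree $T$.'' One has to argue that the sampled children of a state do not depend on anything LTS does subsequently, so that committing to a fixed realization $T$ up front is legitimate, and that de-duplication of thoughts does not break the tree structure (it cannot create cycles, since de-duplication only removes edges). A secondary subtlety is the tie-breaking convention needed so that $s^\star$ is expanded (or the algorithm halts) as soon as its cost is minimal on the frontier; this is the same convention already implicit in Theorem~\ref{thm:lts-bound}, so I would simply state that we adopt it. Everything else is a direct transcription of the original LTS proof with $(\mathcal{T}, H)$ replaced by $(T, H')$, and I would say so explicitly rather than reproducing the counting lemma in full.
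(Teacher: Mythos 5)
Your proposal is correct and follows essentially the same route as the paper: the paper's proof is precisely the original LTS counting argument (partial search tree $T_c$, leaves with $g(n)\le\pi(n)c$, antichain sum $\sum\pi(n)\le 1$) transplanted onto the sampled subtree $T$ with the minimum taken over $H'$, which is what you propose by invoking Theorems~\ref{thm:lts-best-first} and~\ref{thm:lts-bound} on the static realization of $T$. Your added care about fixing the sampling realization up front and about de-duplication preserving the tree structure is a reasonable tightening of a step the paper leaves implicit, but it does not change the argument.
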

\begin{proof}
Let $T_c$ represent the partial search tree consisting only of those states that have been expanded by the time the first terminal state $s_g =\argmin_{s \in H'} \frac{g(s)}{{\pi}(s)}$ is expanded during the search process on $T$. Let $\mathcal{L}$$(T_c)$ denote the leaves of the partial search tree $T_c$.  Let $c = \min_{s \in H'} \frac{g(s)}{{\pi}(s)}$ be the cost of $s_g$. When the first terminal state $s_g$ is expanded, all other states expanded by LTS have a lower cost than $c$. 
Thus $g(n) \le \pi(n) c$ for all states $n \in T_c$ expanded by LTS. Therefore,
$$N(T, H') \le \sum_{n\in \mathcal{L}(T_c)} g(n) \le \sum_{n\in \mathcal{L}(T_c)} \pi(n) c \le c = \min_{s \in H'} \frac{g(s)}{{\pi}(s)} $$
\end{proof}

Given Proposition~\ref{thm:our-lts} and a maximum branching factor \bmax, the number of thoughts generated can be upper bounded as follows
\begin{corollary}
\label{cor:lts-exp}
    Given a tree $\mathcal{T}$, $H$ as the terminal states of $\mathcal{T}$, a subtree $T$ generated by sampling with \bmax\ branching factor,  and $H'\subseteq H$ as terminal states in $T$, LTS with a policy $\pi$ ensures that the number of thoughts generated, $M$, before reaching any of the terminal states in $H'$ is bounded by 
$$
M (T, H') \le b_{\text{max}}\min_{s \in H'} \frac{g(s)}{\pi(s)}
$$
\end{corollary}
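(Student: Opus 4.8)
The plan is to simply combine Proposition~\ref{thm:our-lts} with a counting argument on the number of thoughts generated per expansion. First I would observe that every thought generated by Algorithm~\ref{alg:lts-tot} is produced on line~10 during the expansion of some state $s$, and that each such expansion produces at most $b_{\text{max}}$ thoughts (we sample $b_{\text{max}}$ thoughts per expansion, and removing duplicates can only decrease this count). Hence if $N(T, H')$ denotes the number of state expansions performed before the first terminal state in $H'$ is reached, the total number of thoughts generated satisfies $M(T, H') \le b_{\text{max}} \cdot N(T, H')$.

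The second and final step is to invoke Proposition~\ref{thm:our-lts}, which gives $N(T, H') \le \min_{s \in H'} \frac{g(s)}{\pi(s)}$. Substituting this into the inequality from the first step yields
$$
M(T, H') \le b_{\text{max}} \cdot N(T, H') \le b_{\text{max}} \min_{s \in H'} \frac{g(s)}{\pi(s)},
$$
which is exactly the claimed bound.

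There is essentially no main obstacle here — the corollary is a direct consequence of Proposition~\ref{thm:our-lts} together with the per-expansion branching bound. The only point requiring a moment of care is the bookkeeping on line~7--8 of Algorithm~\ref{alg:lts-tot}: when a terminal state is popped from $\mathcal{F}$ it is returned without being expanded, so no thoughts are generated for it, and the count $N(T,H')$ of expansions indeed does not include this final terminal state. This is consistent with how Proposition~\ref{thm:our-lts} is stated (expansions \emph{before} reaching the terminal state), so the two bounds compose cleanly. One might also note that the argument is agnostic to whether duplicate removal is applied: it only ever helps the bound, so stating $b_{\text{max}}$ as the maximum branching factor is safe.
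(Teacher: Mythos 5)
Your proposal is correct and follows exactly the reasoning the paper intends: the corollary is stated as an immediate consequence of Proposition~\ref{thm:our-lts} combined with the observation that each expansion generates at most $b_{\text{max}}$ thoughts, so $M(T,H') \le b_{\text{max}}\, N(T,H')$. The additional bookkeeping remarks (terminal states popped on lines~7--8 incur no thought generation; duplicate removal only lowers the count) are accurate and consistent with how the paper counts expansions.
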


\textbf{Note on Renormalization:} It might seem intuitive to re-normalize the sampled thoughts using a method like softmax. However, comparing $cost$ of states with different parents (Line 6, Algorithm~\ref{alg:lts-tot}) can be counterintuitive. For example, if the children of a state have a uniform probability distribution, after renormalization, each thought would have a 1/\bmax\ probability of being selected. These values are overestimated when compared to other states where probability density is concentrated in top \bmax\ thoughts (sorted with non-increasing probability). This overestimation can mislead the search by making uniformly low confidence thoughts appear more favorable than genuinely high confidence thoughts from other parts of the tree.

\newcommand{\li}{\ell_i}
\newcommand{\lk}{\ell_k}
\newcommand{\la}{\ell_a}
\newcommand{\bx}{\mathbf{x}}
\newcommand{\bt}{\mathbf{t}}
\newcommand{\lj}{\ell_j}
\newcommand{\cS}{\mathcal{S}}
\newcommand{\cL}{\mathcal{L}}

\subsection{Sensitivity to Temperature Parameter}

In this section, we analyze the sensitivity of the number of thought generations with respect to the temperature parameter $\tau$ in the final LM softmax layer. We consider a vocabulary of size $V$, where token $y_i$ corresponds to the $i^\text{th}$ index in the vocabulary. The probability of selecting $y_i$, given the context (not shown for brevity), is given by

\begin{equation}
\label{eq:sampling_prob}
    p_\tau(y_i) = \frac{e^{\ell_i / \tau}}{ \sum_{j=1}^V e^{\ell_j / \tau}}
\end{equation}
where $\ell_1, \dots, \ell_V$ are the logits of the final layer for a vocabulary of size $V$.

We assume that we have a single goal state $s_g$, and no other terminal states. For a solution path $\{s_0,\dots, s_g\}$ from $s_0$ to $s_g$, we define a sufficiently accurate LM as
\begin{definition}
A language model (LM) is said to be \textbf{sufficiently accurate} if the following condition holds for all tokens $y^g \in Y^g$, where $Y^g$ is the sequence of tokens selected along the solution path from $s_0$ to $s_g$:
    \[
        \sum_{j=1}^V p_\tau(y^g_j) (\li - \lj) \ge \Delta^+
    \]

    for some non-negative scalar $\Delta^+ \ge 0$, where the $i$-th index corresponds to a token along the solution path.
\end{definition}

\begin{theorem}
\label{thm:lts-exp-upper-bound}
    For a sufficiently accurate LM with a fixed parameter $\tau$, let the sampling distribution of tokens be specified by \Cref{eq:sampling_prob}. Let each thought consist of at most $k$ tokens and assume that the state $s_g$ achieving the minimum for \Cref{cor:lts-exp} is unique. We have
    \[
    {\frac{\partial M (T, H')}{\partial \tau}} \leq  b_{\text{max}} \frac{g(s_g)^2}{\pi(s_g)^2} \cdot \frac{k\Delta^+}{\tau^2}
    \]
\end{theorem}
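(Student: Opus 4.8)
The plan is to differentiate the bound $M(T,H') \le b_{\text{max}}\, g(s_g)/\pi(s_g)$ from Corollary~\ref{cor:lts-exp} with respect to $\tau$, treating $g(s_g)$ as constant (the solution path is fixed) and $b_{\text{max}}$ as constant, so that the only $\tau$-dependence is through $\pi(s_g) = \prod_j p_\tau(y^g_j)$. Writing $\log \pi(s_g) = \sum_{y^g_j \in Y^g} \log p_\tau(y^g_j)$, I would first compute $\frac{\partial}{\partial \tau} \log p_\tau(y_i)$ for a single token. Using \Cref{eq:sampling_prob}, $\log p_\tau(y_i) = \ell_i/\tau - \log \sum_{j} e^{\ell_j/\tau}$, and differentiating gives $\frac{\partial}{\partial\tau}\log p_\tau(y_i) = -\frac{\ell_i}{\tau^2} + \frac{1}{\tau^2}\sum_j p_\tau(y_j)\ell_j = -\frac{1}{\tau^2}\sum_j p_\tau(y_j)(\ell_i - \ell_j)$, where I used $\sum_j p_\tau(y_j) = 1$. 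The sufficiently-accurate assumption says exactly that $\sum_j p_\tau(y^g_j)(\ell_i - \ell_j) \ge \Delta^+ \ge 0$ for each token $y_i$ on the path, so $\frac{\partial}{\partial\tau}\log p_\tau(y^g_i) \le -\Delta^+/\tau^2 \le 0$ for each such token.

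Next I would chain these together. Since a thought has at most $k$ tokens and the solution path has $g(s_g)$ thoughts (depth $g(s_g)$), there are at most $k\, g(s_g)$ tokens in $Y^g$, hence $\frac{\partial}{\partial\tau}\log\pi(s_g) = \sum_{y^g_j}\frac{\partial}{\partial\tau}\log p_\tau(y^g_j) \le -k\,g(s_g)\,\Delta^+/\tau^2$ (every term is $\le -\Delta^+/\tau^2$, and there are at most $k\,g(s_g)$ of them; since the bound is negative, more negative terms only help). Converting back, $\frac{\partial \pi(s_g)}{\partial\tau} = \pi(s_g)\frac{\partial}{\partial\tau}\log\pi(s_g) \le -\pi(s_g)\, k\, g(s_g)\,\Delta^+/\tau^2$. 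Then by the quotient rule, $\frac{\partial}{\partial\tau}\frac{g(s_g)}{\pi(s_g)} = -\frac{g(s_g)}{\pi(s_g)^2}\frac{\partial\pi(s_g)}{\partial\tau} \le \frac{g(s_g)}{\pi(s_g)^2}\cdot\pi(s_g)\, k\, g(s_g)\,\Delta^+/\tau^2 = \frac{g(s_g)^2}{\pi(s_g)}\cdot\frac{k\Delta^+}{\tau^2}$. Multiplying by $b_{\text{max}}$ yields an upper bound of $b_{\text{max}}\frac{g(s_g)^2}{\pi(s_g)}\cdot\frac{k\Delta^+}{\tau^2}$; matching the stated theorem requires noting $1/\pi(s_g) \le 1/\pi(s_g)^2$ since $\pi(s_g)\in(0,1]$, which loosens the bound to the claimed form $b_{\text{max}}\frac{g(s_g)^2}{\pi(s_g)^2}\cdot\frac{k\Delta^+}{\tau^2}$.

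The one subtlety is the uniqueness assumption on the minimizing $s_g$: this ensures that for $\tau$ in a neighborhood the identity of the argmin in \Cref{cor:lts-exp} does not change, so $M(T,H')$ is given by the single smooth expression $b_{\text{max}}\,g(s_g)/\pi(s_g)$ and is differentiable — without uniqueness the bound would be a pointwise minimum of several functions and only one-sided derivatives would be available. I would state this at the start of the argument. A second point worth a sentence: I am bounding the derivative of the \emph{upper bound} on $M$, which is what the theorem statement intends (the left side $\partial M/\partial\tau$ should be read as the derivative of the bounding quantity from \Cref{cor:lts-exp}); the proof should make clear we are differentiating $b_{\text{max}}\min_{s\in H'} g(s)/\pi(s)$.

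The main obstacle — and it is mild — is handling the logarithmic-derivative bookkeeping cleanly: being careful that each token contributes a term $\le -\Delta^+/\tau^2$, that summing at most $k\,g(s_g)$ such nonpositive terms gives something $\le -k\,g(s_g)\Delta^+/\tau^2$, and that the signs flip correctly through the quotient rule so the final inequality points the right way. Everything else is the definition of softmax and elementary calculus.
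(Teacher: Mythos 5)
Your proposal follows the same route as the paper's proof: differentiate the bound $b_{\text{max}}\,g(s_g)/\pi(s_g)$ from Corollary~\ref{cor:lts-exp} (using uniqueness of the minimizer — the paper invokes Danskin's theorem — to justify differentiating through the $\min$), compute the temperature derivative of the softmax, and apply the sufficiently-accurate condition tokenwise along the solution path. Your logarithmic-derivative bookkeeping is cleaner than the paper's product-rule step (which drops the factors $\prod_{i\ne j}p_\tau(y^i)\le 1$), and it yields the exact identity $\partial_\tau\pi(s_g)=\pi(s_g)\sum_j\partial_\tau\log p_\tau(y^g_j)$, which is why you land on the tighter intermediate constant $g(s_g)^2/\pi(s_g)$ before loosening to $g(s_g)^2/\pi(s_g)^2$.

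However, there is a direction-of-inequality problem in your final step, and you should know the paper's own chain has the same difficulty. You correctly derive the \emph{upper} bound $\partial_\tau\pi(s_g)\le-\pi(s_g)\,k\,g(s_g)\,\Delta^+/\tau^2$ (i.e., $\pi(s_g)$ decreases \emph{at least} this fast). But the quotient rule gives $\partial_\tau\bigl(g(s_g)/\pi(s_g)\bigr)=-\bigl(g(s_g)/\pi(s_g)^2\bigr)\,\partial_\tau\pi(s_g)$, and multiplying your upper bound on $\partial_\tau\pi(s_g)$ by the negative prefactor $-g(s_g)/\pi(s_g)^2$ flips it into a \emph{lower} bound, $\partial_\tau\bigl(g(s_g)/\pi(s_g)\bigr)\ge g(s_g)^2 k\Delta^+/(\pi(s_g)\tau^2)$, not the claimed upper bound. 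Obtaining the theorem's upper bound would require a \emph{lower} bound on $\partial_\tau\pi(s_g)$, which the one-sided sufficiently-accurate condition ($\ge\Delta^+$ with no matching upper control on the logit gaps) does not supply. A second, smaller slip: summing ``at most $k\,g(s_g)$'' terms, each $\le-\Delta^+/\tau^2$, gives only $\le-m\Delta^+/\tau^2$ for the actual token count $m$, which is weaker than $-k\,g(s_g)\Delta^+/\tau^2$ when $m<k\,g(s_g)$; fewer negative terms make the sum less negative, not more. Neither issue is a departure from the paper — its derivation tracks signs the same way, and the remark following the theorem (that the derivative is non-negative, so increasing $\tau$ increases the number of queries) is really the conclusion of the correctly-signed lower bound — but as a standalone argument your final inequality does not follow from the lines preceding it.
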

\begin{proof}
We first consider the partial derivative of $p_\tau(y_i)$.
\begin{align*}
    \frac{\partial p_\tau(y_i)}{\partial \tau} &= \frac{ \left(\frac{\partial}{\partial \tau}  e^{\ell_i / \tau} \right) \left(\sum_{j=1}^V e^{\ell_j / \tau}\right) - \left(e^{\ell_i / \tau} \right) \left(\frac{\partial}{\partial \tau}\sum_{j=1}^V e^{\ell_j / \tau}\right)}{\left(\sum_{j=1}^V e^{\ell_j / \tau}\right)^2} \\
    &= \frac{ \left( \frac{-\li}{\tau^2} e^{\li / \tau}\right) \cS + e^{\li / \tau} \left( \sum_{j=1}^V \frac{\lj}{\tau^2} e^{\lj / \tau} \right)}{ \cS^2} \tag{Let $\cS = \sum_{j=1}^V e^{\ell_j / \tau}$} \\
    &= \frac{e^{\li / \tau}}{\tau^2 \cS} \left( -\li + \sum_{j=1}^V \ell_j \cdot \frac{e^{\lj / \tau}}{\cS}\right) \\
    &= \frac{p_\tau(y_i)}{\tau^2} \left( -\li + \sum_{j=1}^V \lj p_\tau(y_j)\right) \tag{Since $p_\tau(y_i) = e^{\li / \tau} / \cS$} \\
    &= \frac{p_\tau(y_i)}{\tau^2} \left( \sum_{j=1}^V p_\tau(y_j) (\lj - \li) \right) \tag{Since $\sum_{j=1}^V p_\tau(y_j) = 1$, use $-\li = -\sum_{j=1}^V \li p_\tau(y_j)$} \\
\end{align*}

Next, we consider the derivative for the probability of generating a thought of at most $k$ tokens, $\bt = \{y^1, \cdots, y^k\}$. We have $p_\tau(\bt) = \prod_{j=1}^k p_\tau(y^j | y^{<j})$.

\begin{align*}
    \frac{\partial p_\tau(\bt)}{\partial \tau} &= \sum_{j=1}^k \left(\frac{\partial p_\tau(y^j)}{\partial \tau} \cdot \prod_{i \neq j} p_\tau(y^i)\right) \tag{Product rule} \\
    &\leq \sum_{j=1}^k \left(\frac{\partial p_\tau(y^j)}{\partial \tau} \right) \tag{$\prod_{i \neq j} p(w_i) \leq 1$} \\
\end{align*}

Now we analyze the derivative of number of thoughts generated with respect to temperature. Consider a state $s$ at depth $g(s)$ and we have that $\pi(s) = p_\tau(s)$. Each state before this state consists of at most $g(s)$ thoughts, each containing at most $k$ tokens. Since there is a unique path to any state $s$, simply extending the calculation above, we obtain 

\begin{equation}
\label{eq:state_derv}
    {\frac{\partial \pi(s)}{\partial \tau}} \leq   \sum_{s_i}\sum_{j=1}^{k}\left(\frac{\partial p_\tau(y^j)}{\partial \tau} \right)
\end{equation}

Note, we ignore indexing notation for the outer summation for brevity.

From \Cref{cor:lts-exp}, $M (T, H') \le b_{\text{max}} \min_{s \in H'} \frac{g(s)}{{\pi}(s)}$. Since we assume that the minimum is achieved at a unique state $s_g$, using Danskin's theorem~\citep{danskin2012theory} we obtain

\begin{align*}
    \frac{\partial M (T, H')}{\partial \tau} &= b_{\text{max}} \frac{\partial}{\partial \tau} \left( \frac{g(s_g)}{{\pi}(s_g)} \right) \\
    &= b_{\text{max}} \frac{-g(s_g)}{\pi(s_g)^2} \frac{\partial \pi(s_g)}{\partial \tau} \\
    &  \text{Denoting token along solution path with } y^{g,j} \text{ with $j^\text{{th}}$ token in the thought} \\
    &\leq -b_{\text{max}} \frac{g(s_g)}{\pi(s_g)^2} \sum_{s_i}\sum_{j=1}^{k}\left(\frac{\partial p_\tau(y^{g,j})}{\partial \tau} \right) \tag{From Eq 2.} \\
    &= -b_{\text{max}} \frac{g(s_g)}{\pi(s_g)^2}   \sum_{s_i}\sum_{j=1}^{k}\left(\frac{p_\tau(y^{g,j})}{\tau^2} \left( \sum_{a=1}^V p_\tau(y^{g,j}_a) (\la - \li) \right) \right) \tag{$i$ is the optimal token index}\\
    &\leq -b_{\text{max}} \frac{g(s_g)}{\pi(s_g)^2} \sum_{s_i}\sum_{j=1}^{k}\left(\frac{p_\tau(y^{g,j})}{\tau^2} (-\Delta^+) \right) \tag{From Definition 1} \\
    &\leq b_{\text{max}} \frac{g(s_g)}{\pi(s_g)^2} \sum_{s_i}\sum_{j=1}^{k}\left(\frac{\Delta^+}{\tau^2}  \right) \\
    &= b_{\text{max}} \frac{g(s_g)}{\pi(s_g)^2} \cdot \frac{k \Delta^+ g(s_g)}{\tau^2}  \\
    &= b_{\text{max}} \frac{g(s_g)^2}{\pi(s_g)^2} \cdot \frac{k\Delta^+}{\tau^2} 
\end{align*}
\end{proof}

\paragraph{Remark:}
The gradient depends inversely on the square of the probability of state $s_g$, that is, $\pi(s_g)$, and the temperature $\tau$. Based on this dependence, we note that the number of thought generations depend on the values of $\pi(s_g)$ and $\tau$. Increasing $\tau$ causes the probabilities to become more uniform and, in turn, decreases $\pi(s_g)$ when the path from $s_0$ to $s_g$ is the most likely path. If the change is such that the product $\pi(s_g) \cdot \tau$ remains relatively stable, we can expect the number of expansions to be somewhat robust to a change in the temperature. The main inference that can be drawn is that the upper bound is a non-negative number.  This suggests that increasing $\tau$ will lead to an increase in the number of LM queries, given that the LM is sufficiently accurate. In experiments we observed that it was often the case that the ToT had multiple goal nodes, as opposed to the theoretical assumption. As such, the theoretical results should be viewed as a general guidance for temperature tuning and not as a strict bound.


\section{Experiments}
\label{sec:expt}
This section aims to compare the performance of LTS, DFS, and beam search across multiple domains under varying but constrained LM query budgets.
\subsection{Setup}

This work builds upon the implementation by~\citet{hao2023reasoning}. The thought generation temperature parameter $\tau$ was set to 0.8. Top-k sampling~\citep{fan2018hierarchical} was used with k=50. The maximum branching factor $b_{max}$ for all search algorithms was set to 3. Duplicate thoughts were removed, as duplication is possible during sampling. These hyperparameters were selected following the literature~\citep{hao2023reasoning}. For LTS, the temperature for state evaluation (distinct from thought generation) was set to 1 unless stated otherwise.
A small value ($\sim 10^{-14}$) was added to the denominator of the $cost$ function (to $\pi$) in the LTS algorithm to avoid division by 0. All search algorithms were terminated when either (1) the first terminal state was expanded or (2) a predefined budget was exceeded. The budget is measured as the number of thoughts generated, where each thought is counted twice if additional LM queries are made for state evaluation. The budget was counted twice for DFS and beam search, as it requires additional LM queries to evaluate the state. This assumes the cost of thought generation is similar to that of state-evaluation. Algorithms were terminated when the first terminal state was found, instead of expanding multiple terminal states and selecting the best, as done in~\citep{hao2023reasoning}, to avoid the additional computational overhead from extra LM queries. Our focus is on scenarios with highly limited compute budgets, such as edge-device deployments. Therefore, our experiments and comparisons are restricted to low-budget settings to evaluate performance under tight-computational constraints. We use the Llama 3 instruct models~\citep{grattafiori2024llama} (1B, 3B, and 8B), as well as Qwen 2 models (7B)~\citep{team2024qwen2} as LMs in our experiments for two main reasons. First, we aim to evaluate our method under constrained inference settings, and smaller models allow us to rigorously assess test-time compute efficiency in compute-limited scenarios, which is a core focus of this work. 
Second, Llama and Qwen models are open-weight, high-quality LMs that offer competitive performance~\citep{grattafiori2024llama}. We note that our method is general and agnostic to the choice of LM, and we expect our findings to extend to other small-scale models beyond Llama models.

\subsection{Domains}

We selected benchmarks following established practices in the ToT literature~\citep{yao2023tree,hao2023reasoning}, using PrOntoQA, Blocksworld and a novel Sort 
as our primary evaluation domains. We did not evaluate all domains from prior work due to practical constraints with small LMs. Specifically, domains such as Game of 24 and Crosswords were excluded because preliminary experiments with Llama-3.2-1B showed high rates of invalid action generation (>40\% for Game of 24), where the model produced syntactically incorrect operations or malformed outputs. Since our focus is evaluating search algorithms rather than action validity mechanisms, domains where the base model cannot reliably generate valid actions prevent meaningful algorithmic comparison. StrategyQA was excluded following~\citet{yao2023tree}, who noted that ``CoT is already very good on such tasks, and StrategyQA's bottleneck is external knowledge, not reasoning.'' Creative writing was not included as it involves relatively shallow search trees, limiting our ability to study the effectiveness of search algorithms.

\noindent \textbf{Blocksworld (BW):} We consider the Blocksworld domain~\citep{valmeekam2022large,valmeekam2023planning}, commonly used in heuristic search literature~\citep{pendurkar2022discussion,pendurkarscalability}, which involves arranging blocks on an infinite table. Given start and goal configurations, the task is to perform a sequence of actions (represented as thoughts) that transition the system from the start to the goal arrangement. The prompt specifies the available actions, which include \texttt{stack}, \texttt{unstack}, \texttt{put}, and \texttt{pickup}, each potentially taking block names as arguments depending on the action. A state is said to be terminal if the most recent thought starts with \texttt{[PLAN END]} or the max search depth has reached. We follow the same instance grouping strategy as~\citet{hao2023reasoning}, where BW (step 2) denotes instances solvable with a search depth of 2. We evaluate depths 2, 4, 6, 8, 10, and 12 in our experiments.

\noindent \textbf{PrOntoQA:} We consider the PrOntoQA~\citep{saparovlanguage} domain and follow the same setup as in~\citet{hao2023reasoning}. Specifically, we adopt the `true' ontology setting and the `random' ordering of rules, and merge examples requiring 3 to 5 reasoning hops. A state is terminal if the most recent thought begins with \texttt{The answer is}. We use the same 500 sampled instances as~\citet{hao2023reasoning}, generated using the script provided by~\citet{saparovlanguage}. The maximum search depth was set to 10.

\noindent \textbf{Array Sorting (Sort):} We consider the task of sorting an array with a fixed size of 5 elements, referred to as the Sort domain. The input prompt instructs the LM to perform the sorting using only pairwise swaps, i.e., each thought involves swapping exactly two elements. The array elements may be negative or positive and can include a single decimal point. A terminal state is a state where the most recent thought begins with \texttt{Answer:}. The prompt includes instructions so LM is properly instructed. The dataset consists of 100 instances, with the numbers randomly generated. The maximum search depth was set to 10.

\subsection{Baselines}

We consider DFS and beam search as the primary baselines for our experiments. The self-evaluation strategy used to guide DFS and beam search follows~\citet{hao2023reasoning} for both the BW and PrOntoQA domains. In these domains, the reward for each generated thought is computed by combining two components: (1) the log probability assigned by the LM to the thought, and (2) a self-evaluation score derived from a secondary LM prompt that explicitly asks whether the current thought, as an intermediate thought, is valid, using yes/no candidates. 
For the Sort domain, we rely solely on the LM log probabilities to guide DFS. As a result, each thought is counted as one toward the budget in this setting, since no additional LM queries are required for state evaluation. The beam size used for the beam search algorithm was set to 3, as we observed it offered the best performance for the budgets considered. That is, a higher beam size made beam search sample-inefficient, while a lower beam size acted greedily, similar to DFS.

We exclude MCTS discussed in~\citet{hao2023reasoning} due to its known sample inefficiency~\citep{borges2021combining}.
Furthermore, we do not compare against the approach of~\citet{zhuangtoolchain}, which applies the A* search algorithm, because it assumes access to well-defined cost functions that are either handcrafted or derived from demonstrations and heuristics. In contrast, we only consider algorithms that operate directly on the probabilistic structure induced by the LM (ToT), without assuming access to any domain-specific cost functions, as discussed in Section~\ref{sec:prelim-lts}. Consequently, such methods cannot be compared directly to approaches requiring cost functions that are typically unavailable in most domains.

\subsection{Evaluation}
We consider accuracy (success rate) as a metric of evaluation. To determine the correctness of the solution, we assume access to an evaluator, specific to each domain. Note, we do not assume access to an evaluator in real-life scenarios - we select domains where evaluators are accessible to verify the performance of the methods following the literature~\citep{hao2023reasoning,yao2023tree}.  

\textbf{BW:} The generated sequence of thoughts is validated as a plan using a PDDL planner~\citep{valmeekam2022large}. If the plan reaches the specified goal configuration, the terminal state is considered a goal state.

\textbf{PrOntoQA:} For evaluation, we compare the entire sequence of thoughts against the ground truth answer (with intermediate thoughts) provided in the dataset~\citep{saparovlanguage}, following the `entire proof' evaluation proposed by~\citet{hao2023reasoning}.

\textbf{Sort:} The evaluator parses the generated sequence of thoughts to retrieve the sorted array. The evaluator further checks if this array is a sorted version of the input. If the resultant array is sorted, the state is considered a goal state.

\subsection{Results: Comparison with Increasing LM Query Budget}
\label{sec:increasing-query-budget}

In this section, we compare the performance of LTS, DFS, and beam search on the ToT framework with increasing budgets. We focus on three representative settings: BW (step 2), BW (step 4), and PrOntoQA (first 100 instances) using the LLaMA 3.2 3B Instruct model. We include BW (step 4) instead of the Sort domain, as the Sort domain typically requires very few thoughts to be generated, making performance trends difficult to observe. The results are presented in Figure~\ref{fig:steps}.

As expected, accuracy increases with the budget and eventually stagnates. The stagnation is clearly seen for all algorithms for BW (step 2), and for DFS for (step 4). Initially, higher budgets allow the algorithm to solve problem instances that were previously unsolved. Once all instances reach their respective terminal states, additional budget does not yield further improvements. For instance, in BW (step 2), accuracy increases until it plateaus at a budget of 15.
We observe that LTS generally outperforms DFS across the settings considered, highlighting the advantages of LTS under both low and high budget regimes. The improvement in the high budget regime can be attributed to the issues with DFS outlined in Section~\ref{sec:problems-tot}, that is, initial-commitment. Further, the improvement in the low budget regime can be attributed to two factors: initial-commitment, and DFS typically requiring additional LM queries for state evaluation. These results suggest that LTS generally performs better than DFS across varying LM query budgets.

An exception occurs at intermediate budget ranges, specifically BW (step 4) at a budget of 15, where DFS slightly outperforms LTS. This can be explained by the greedy nature of DFS, which favors deeper expansions and is more likely to reach a terminal state earlier. If the LM is accurate, DFS may solve instances with fewer state expansions than LTS. It is also worth noting that LTS with a temperature parameter of zero (used during search) can emulate DFS behavior if a stability term is added at each level and ties in cost are broken in favor of higher $g(s)$ values. LTS can thus offer flexibility across different regimes by interpolating between greedy and balanced exploration of ToT.

On the other hand, when comparing with the performance of beam search, we can see that for BW (step 2) the accuracy is higher than DFS (comparable to LTS) suggesting that exploration helps beam search. However, as 3 states are expanded at each depth, beam search requires a significant budget for search especially with increasing complexity resulting in poor accuracy. 

In summary, the results suggest that LTS outperforms or matches DFS across varying LM query budgets. This advantage likely stems from LTS avoiding initial-commitment and incurring fewer LM queries. While beam search performs reasonably well on easier instances (e.g., BW step 2), its performance tends to degrade on harder problems due to sample inefficiency.

\begin{figure}[t]
    \centering
    \begin{subfigure}[t]{0.3\textwidth}
        \centering
     \includegraphics[width=\linewidth]{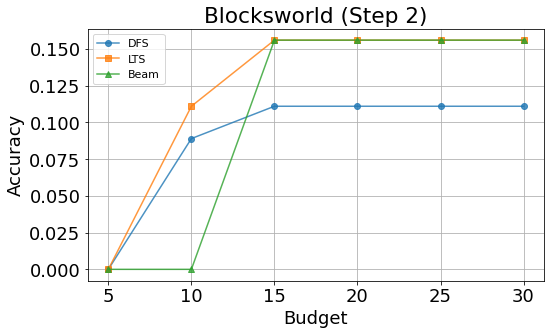}
        \caption{Step 2}
        \label{fig:step2}
    \end{subfigure}
    \hfill
    \begin{subfigure}[t]{0.3\textwidth}
        \centering
\includegraphics[width=\linewidth]{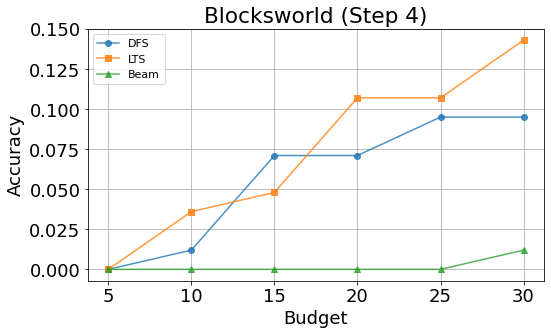}
        \caption{Step 4}
        \label{fig:step4}
    \end{subfigure}
    \hfill
    \begin{subfigure}[t]{0.3\textwidth}
        \centering
\includegraphics[width=\linewidth]{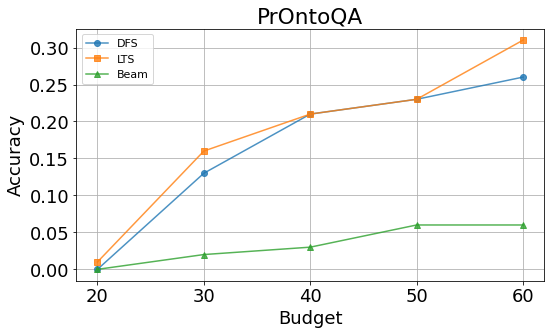}
        \caption{PrOntoQA (100 instances)}
        \label{fig:prontoqa}
    \end{subfigure}
    \caption{Comparison of accuracy of LTS, DFS, and beam search with increasing budgets across 3 different settings.}
    \label{fig:steps}
\end{figure}
\subsection{Results: Comparison with Increasing Time Budget}

\begin{figure}[htpb]
    \centering
    \begin{subfigure}[t]{0.3\textwidth}
        \centering
     \includegraphics[width=\linewidth]{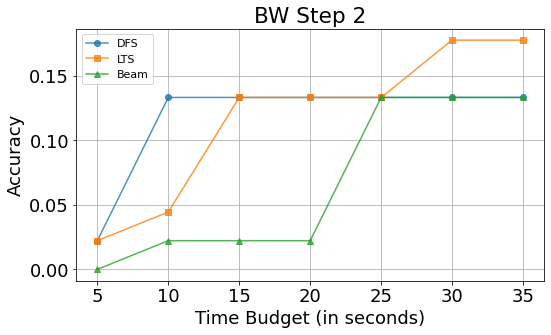}
        \caption{Step 2}
        \label{fig:step2-time}
    \end{subfigure}
    \hfill
    \begin{subfigure}[t]{0.3\textwidth}
        \centering
\includegraphics[width=\linewidth]{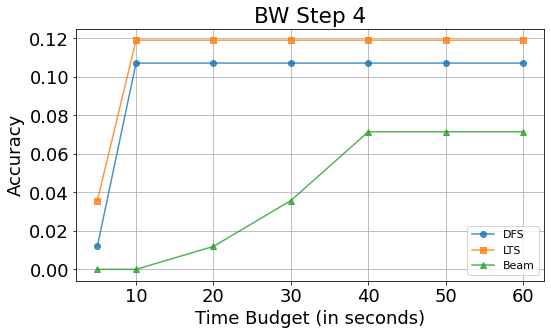}
        \caption{Step 4}
        \label{fig:step4-time}
    \end{subfigure}
    \hfill
    \begin{subfigure}[t]{0.3\textwidth}
        \centering
\includegraphics[width=\linewidth]{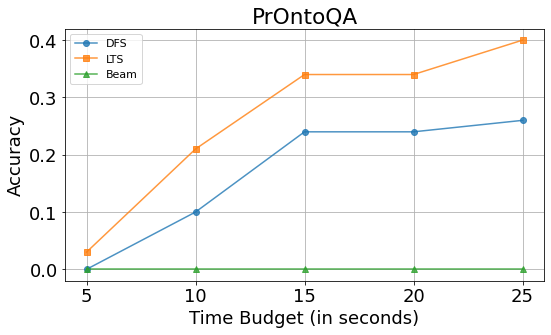}
        \caption{PrOntoQA (100 instances)}
        \label{fig:prontoqa-time}
    \end{subfigure}
    \caption{Comparison of accuracy of LTS, DFS, and beam search with increasing time budgets across 3 different settings.}
    \label{fig:time-budget}
\end{figure}

In this section, we compare the performance of LTS and DFS using wall-clock time for LM queries (thought generation and evaluation) as the budget metric, rather than the number of LM queries. 

Figure~\ref{fig:time-budget} presents the accuracy of LTS, DFS, and beam search across three settings: BW (step 2), BW (step 4), and PrOntoQA (first 100 instances) using the LLaMA 3.2 3B Instruct model. The time budget is measured in seconds.

The results show trends similar to those observed with query-based budgets (Section~\ref{sec:increasing-query-budget}). LTS generally achieves comparable or higher accuracy than DFS across different time budgets. For BW (step 4) and PrOntoQA, LTS demonstrates consistent improvements over DFS. In BW (step 2), we observe that DFS achieves slightly higher accuracy at certain intermediate time budgets, which aligns with the query-based results and can be attributed to DFS's greedy exploration strategy reaching terminal states more quickly when the LM provides accurate guidance.

These findings confirm that the advantages of LTS over DFS extend beyond query efficiency to practical wall-clock time constraints, making it well-suited for latency-critical applications.

\subsection{Results: Comparison when DFS can reach a terminal state}
\begin{table}[t]
\centering
\caption{Accuracy (\%) of LTS and DFS across different domain settings and models.}
\begin{tabular}{@{}lcrrr@{}}
\toprule
\textbf{Task} & \textbf{Model} & \textbf{Size} & \textbf{LTS Accuracy} & \textbf{DFS Accuracy} \\
\midrule
BW (step 2)  & Llama 3.2 Instruct & 1B  & 8.9\%  & 8.9\% \\
BW (step 4)  & Llama 3.2 Instruct & 1B  & 2.4\%  & 2.4\% \\
BW (step 2)  & Llama 3.2 Instruct & 3B  & 15.5\% & 11.1\% \\
BW (step 4)  & Llama 3.2 Instruct & 3B  & 14.3\% & 9.5\%  \\
BW (step 6)  & Llama 3.2 Instruct & 3B  & 3.9\%  & 3.3\%  \\
BW (step 8)  & Llama 3.2 Instruct & 3B  & 1.3\%  & 0.0\%  \\
BW (step 10) & Llama 3.2 Instruct & 3B  & 0.0\%  & 0.0\%  \\
BW (step 12) & Llama 3.2 Instruct & 3B  & 0.0\%  & 0.0\%  \\
BW (step 2)  & Llama 3.1 Instruct & 8B  & 33.3\% & 20.0\% \\
BW (step 4)  & Llama 3.1 Instruct & 8B  & 28.6\% & 19.0\% \\
BW (step 6)  & Llama 3.1 Instruct & 8B  & 17.8\% & 9.2\%  \\
BW (step 8)  & Llama 3.1 Instruct & 8B  & 10.6\% & 4.6\%  \\
BW (step 10) & Llama 3.1 Instruct & 8B  & 8.0\%  & 7.1\%  \\
BW (step 12) & Llama 3.1 Instruct & 8B  & 6.5\%  & 0.0\%  \\
BW (step 2)  & Qwen 2 & 7B  & 31.1\% & 26.7\% \\
BW (step 4)  & Qwen 2 & 7B  & 11.9\% & 10.7\% \\
BW (step 6)  & Qwen 2 & 7B  & 6.6\%  & 2.0\%  \\
BW (step 8)  & Qwen 2 & 7B  & 2.6\%  & 2.6\%  \\
BW (step 10) & Qwen 2 & 7B  & 1.8\%  & 0.9\%  \\
BW (step 12) & Qwen 2 & 7B  & 0.0\%  & 0.0\%  \\
Sort         & Llama 3.2 Instruct & 1B  & 5.0\%  & 2.0\%  \\
Sort         & Llama 3.2 Instruct & 3B  & 20.0\% & 15.0\% \\
Sort         & Llama 3.1 Instruct & 8B  & 47.0\% & 47.0\% \\
Sort         & Qwen 2 Instruct & 7B  & 26.0\% & 22.0\% \\
PrOntoQA     & Llama 3.2 Instruct & 1B  & 18.6\% & 15.6\% \\
PrOntoQA     & Llama 3.2 Instruct & 3B  & 27.8\% & 18.8\% \\
PrOntoQA     & Llama 3.1 Instruct & 8B  & 70.4\% & 49.0\% \\
PrOntoQA     & Qwen 2 Instruct & 7B  & 21.6\% & 12.8\% \\
\bottomrule
\end{tabular}
\label{tab:combined_lts_dfs_accuracy}
\end{table}

In this section, we compare the performance of DFS and LTS across all domain settings and the four LMs considered. We do not consider beam search due to its sample inefficiency, as discussed in previous section. The budget for each domain–LM pair was configured such that DFS could reach a terminal state for every instance. In other words, the budget was tuned to maximize DFS performance, but not for LTS, as LTS typically stagnates later than DFS (see Figure~\ref{fig:step4}). The results are summarized in Table~\ref{tab:combined_lts_dfs_accuracy}. Results for higher-step settings in BW with the 1B and 3B LMs are omitted, as both algorithms achieved 0\% accuracy. Overall, LTS performed comparably to or better than DFS in all settings. The largest improvement was observed on PrOntoQA and Llama 3.1 8B, with a 21.4\% gain in accuracy. Note, since the proposed method is purely inference-only, its performance is inherently constrained by the quality of the heuristic provided by the underlying LM. Consequently, when the LM heuristic is weak, we do not expect dramatic improvements in absolute accuracy across tasks.
 These findings suggest that LTS is generally more effective than DFS under constrained budgets and can offer further benefits as the budget increases.

\subsection{Results: Sensitivity of LTS to Temperature}

Figure~\ref{fig:sensitivity-lts} presents the performance of LTS across different temperature values for the cost function ($\tau \in \{0.01, 0.5, 1.0, 1.5, 2.0\}$) as a function of time budget on Blocksworld Step 4 using Llama 3.2 3B Instruct. Note that the same temperature ($\tau=0.8$) was used for thought generation across all experiments; the varied temperature parameter applies only to the cost function computation in LTS (Line 6, Algorithm~\ref{alg:lts-tot}), affecting which states are prioritized for expansion.
\begin{wrapfigure}{r}{0.35\textwidth}

    \centering
\includegraphics[width=0.35\textwidth]{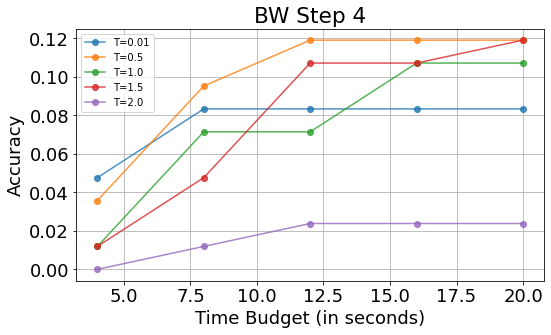}
    \caption{\label{fig:sensitivity-lts}LTS performance with varying temperature $\tau$ values.}
\end{wrapfigure}
The results reveal three distinct behavioral regimes. \emph{Low Temperature ($\tau = 0.01$):} At very low temperature, LTS exhibits behavior similar to DFS. The cost function becomes highly sensitive to probability differences, causing the algorithm to strongly commit to high-probability paths. This results in good performance at low budgets (4.5\% at 5 seconds), but accuracy plateaus early and reaches only $\sim$8.5\% at 20 seconds. This plateau occurs due to the initial-commitment problem. \emph{Intermediate Temperature ($\tau = 0.5$--$1.5$):} At moderate temperatures, LTS demonstrates improved exploration-exploitation balance. These configurations show relatively poor performance at lower budgets (3.5--3.8\% at 5 seconds) but steadily improve as budget increases, ultimately achieving the highest accuracies (10.5--12\% at 20 seconds). The initial performance deficit occurs because intermediate temperatures encourage broader exploration rather than immediate commitment to high-probability paths. However, this exploratory behavior enables discovery of superior solutions that low-temperature search misses, resulting in continued accuracy improvements across the entire budget range. \emph{High Temperature ($\tau = 2.0$):} At very high temperature, LTS exhibits breadth-first search (BFS)-like behavior. The flattened probability distribution causes the algorithm to explore states more uniformly, with minimal preference for high-probability paths. This results in the poorest performance at low budgets and modest final accuracy ($\sim$2.1\% at 20 seconds). While this temperature could eventually achieve competitive performance, it requires substantially more computational budget to reach accuracy levels that intermediate temperatures achieve more efficiently.

These results suggest that temperature selection involves a trade-off between early performance and ultimate solution quality. In this domain, $\tau \approx 0.5$--$1.0$ offers reasonable early performance while maintaining capacity for continued improvement under budget constraints. When maximizing final accuracy is prioritized over early results, $\tau \approx 0.5$--$1.5$ shows promise, though optimal values may slightly vary based on domain under consideration.

\section{Summary}
This work addresses the challenge of improving Language Model (LM) performance on problem-solving tasks under strict LM query constraints. Specifically, it focuses on the Tree-of-Thoughts (ToT) framework, where a search tree is generated with edges representing sequences of tokens, or thoughts. The paper adapts Levin Tree Search (LTS), a policy-guided search algorithm, to this framework. It also extends the theoretical guarantees of LTS to ToT by bounding the number of state expansions and, consequently, the number of LM generated thoughts. 
Empirical results suggest that LTS achieves accuracy that is comparable to or better than guided depth first search (DFS) and beam search, which were originally proposed by~\citet{yao2023tree}. Further, the results across three domains and four different LMs show that LTS achieves accuracy that is comparable to or better than guided DFS under tight computational budgets. These findings highlight the effectiveness of LTS within the ToT framework and its applicability to cost-sensitive and time-sensitive settings.

\section*{Acknowledgments}
The authors would like to thank Prathamesh Dharangutte for early discussions about this work. The research was conducted in the PiStar AI and Optimization Lab at Texas A\&M University. PiStar is supported in part by the NSF (IIS-2238979).
\bibliography{main}
\bibliographystyle{tmlr}

\end{document}